\crefname{section}{Sec.}{Secs.}
\Crefname{section}{Section}{Sections}
\Crefname{table}{Table}{Tables}
\crefname{table}{Tab.}{Tabs.}
\newcommand{\yn}{\mathbf{y}^{\left(n\right)}}
\newcommand{\dnc}{d_c^{\left(n\right)}}
\newcommand{\qnc}{q_c^{\left(n\right)}}
\newcommand{\dncp}{d_{c'}^{\left(n\right)}}
\newcommand{\Kn}{\mathcal{K}^{\left(n\right)}}
\newcommand{\ra}[1]{\renewcommand{\arraystretch}{#1}}
\newtheorem{proposition}[]{Proposition}
\title{A sampling-based approach for efficient clustering in large datasets}
\author{%
 Georgios Exarchakis\thanks{Corresponding Author.} \\
 IHU Strasbourg, France\\
 ICube, University of Strasbourg, CNRS, France\\
 Sorbonne Universit\'e, INSERM, CNRS, France\\
 Institut de la Vision, 17 rue Moreau, 
 F-75012 Paris, France \\
  \texttt{georgios.exarchakis@ihu-strasbourg.eu} 
   \and
 Omar Oubari\\
 Sorbonne Universit\'e, INSERM, CNRS, \\
 Institut de la Vision, 17 rue Moreau, 
 F-75012 Paris, France
   \and
 Gregor Lenz\\
 Sorbonne Universit\'e, INSERM, CNRS, \\
 Institut de la Vision, 17 rue Moreau, 
 F-75012 Paris, France
}
\begin{document}
\maketitle

\begin{abstract} 
We propose a simple and efficient clustering method for
high-dimensional data with a large number of clusters. 
Our algorithm achieves high-performance by evaluating distances of 
datapoints with a subset of the cluster centres. 
Our contribution is substantially more efficient than k-means as 
it does not require an all to all comparison of data points and clusters.
We show that the optimal solutions of our approximation are the same 
as in the exact solution. However, our approach is considerably more 
efficient at extracting these clusters compared to the state-of-the-art.
We compare our approximation with the exact k-means and alternative 
approximation approaches on a series of standardised clustering tasks. 
For the evaluation, we consider the algorithmic complexity, including
number of operations to convergence, and the stability of the results.
An efficient implementation of the algorithm is available \href{https://github.com/ooub/peregrine}{online}.
\end{abstract}

\section{Introduction}
\label{sec:intro}
Data clustering is an ubiquitous problem in Machine Learning literature.
One of the most popular approaches for clustering is the k-means algorithm. 
Due to the simplicity of the algorithm and relative efficiency of identifying clusters   
the algorithm has found use in a wide range of fields, e.g.
medicine, physics and computer vision among many others. 
With the increasingly large supply of data, computational efficiency improvements become all the more significant.
Recent advances in approximate inference methods have allowed for training algorithms with convergence performance that is sub-linear to the number of clusters \cite{forster2018can}.
This is typically achieved by avoiding the comparison between datapoints and clusters that lie far away in feature space.

Probabilistic data models that relate to the k-means algorithm can be found 
in Gaussian Mixture Models (GMM). 
In particular, L\"ucke and Forster~\cite{luecke2019kmeans} 
detail the relationship between k-means and a variational 
approximation of the Expectation Maximisation (EM) algorithm 
for isotropic GMMs. In most cases, the GMM-based formulation 
of the clustering problem provides higher likelihoods 
without introducing impractical constraints \cite{luecke2019kmeans}.
%
Theoretical developments in convergence analysis of Gaussian Mixture Models (GMM)
\cite{xu2016global,moitra2010settling,daskalakis2014faster} concerning 
global and optimal convergence have sparked renewed interest in the field.
Novel training algorithms that aim for stability \cite{Kolouri_2018_CVPR, hosseini2019alternative} and increased efficiency \cite{hirschberger2019accelerated, forster2018can} are actively being
developed.
Markov Chain Monte Carlo (MCMC) methods have also been employed for fitting GMMs to 
account for input variation \cite{LiMCMC2019,yang1998gaussian,favaro2013mcmc}. 
Tree-based methods have also been proposed for efficient inference in k-means \cite{hamerly2010making,newling2016fast}. However, tree-based methods are known to exhibit instabilities when faced with small perturbations in the data, due to their recurrent structure, and therefore are likely to produce substantially different clusters for small changes in the input.

In this work, we propose a method for efficient
EM-based learning 
that uses a truncated approximation \cite{lucke2010expectation,lucke2018truncated} of the posterior in the E-step. 
To identify the truncated space, we draw samples from a proposal distribution 
that is based on the truncated subspace of the previous iteration and 
favours clusters near the optimal cluster of the previous truncated posterior.
Our algorithm integrates recent developments in initialisation methods \cite{bachem2016approximate,bachem2016fast} and can be applied on coresets \cite{har2004coresets,bachem2018scalable} to maintain comparable performance 
to state-of-the-art approaches. 


Truncated approximations have been used in the past for multiple-cause 
models \cite{exarchakis2017discrete,dai2013invariant,forster2017truncated,exarchakis2012ternary} 
to achieve efficient training in discrete latent variable models.  
Stochastic approximations on a truncated space \cite{lucke2018truncated,guiraud2018evolutionary} 
focusing on deep learning models have also been proposed.
We expect a stochastic approach to avoid well-known local optima issues \cite{jin2016local} related to EM-based learning
for GMMs.

Truncated approximations on clustering algorithms have only been attempted with deterministic approximation techniques \cite{hirschberger2019accelerated,forster2018can}. In fact,
our literature research shows that the vc-GMM \cite{hirschberger2019accelerated} sets  the state-of-the-art in terms of computational efficiency for a GMM with similar constraints to the ones we study. vc-GMM relies on a set of indices of datapoints that get assigned to the same cluster in order to identify similar clusters. The most similar cluster is identified at each step and immediately assigned to the truncated space of a datapoint to efficiently navigate to the optimal clustering solution. The approach, however, is deterministic and such methods typically exhibit results unstable to initialisation and frequently converge to local optima. It would be therefore prudent to explore stochastic analogues to these approximations.
The approach we take in this work largely resembles vc-GMM, however, we utilise a similarity matrix over the clusters in order to identify clusters with a higher probability of being near a 
datapoint without having to evaluate its distance to all clusters.
Estimating the similarity matrix relies on posterior approximations from earlier 
iterations and does not require excessive computation. We show that our stochastic approach improves over the performance of vc-GMM and we consider the definition and implementation of our algorithm to be significantly simpler.

In the numerical experiments section, we evaluate the performance 
of the algorithm and compare with relevant literature. 
In the artificial data section, we evaluate our algorithm 
in terms of extracting the ground truth and we compare it 
to k-means to observe an improved performance. In the real 
data clustering section, we apply the algorithm on four different datasets, namely, KDD, a protein homology 
dataset \cite{caruana2004kdd},
CIFAR-10, an image dataset \cite{krizhevsky2009learning}, 
SONG, a dataset of music metadata \cite{Bertin-Mahieux2011} 
and SUSY, a high energy physics dataset \cite{baldi2014searching}. 
We compare our algorithm to the state-of-the-art 
in terms of efficiency, stability and accurate cluster recovery.

Results show that our algorithm sets the state-of-the-art in terms of efficiency 
without compromising, and probably improving, stability. Our method can be applied on a wide variety of tasks while 
maintaining a competitive clustering performance.

\section{EM with sparsely sampled clusters for GMMs}
\label{sec:gmm}
To introduce the k-means algorithm in terms of an optimisation framework, we consider a Gaussian Mixture Model fitted with the Expectation Maximisation algorithm. The relationship between k-means and variational approximations to a GMM are detailed by L\"ucke and Forster~\cite{luecke2019kmeans}. Using the GMM-based formalisation, we can derive a novel clustering algorithm with same global optima as the original algorithm, and substantial computational efficiency benefits.

For a dataset of $N$  data points, $\mathcal{Y}=\left\{\mathbf{y}^{\left(1\right)},\ldots,\mathbf{y}^{\left(N\right)}\right\}$ we wish to identify, $M$, cluster centres $\mu_c, \forall c\in\{1,\ldots,M\}$. 
To that end, each datapoint $\yn$ is treated as  an  
instance of a random variable $Y$ that follows one of $M$ 
possible Gaussian distributions $p\left(Y\vert C=c;\theta\right)= \mathcal{N}\left(Y;\mu_c,\sigma \mathbb{1}\right)$ 
with a prior probability distribution $p\left(C\right)=\frac{1}{M}$, where $C$ takes values in $\left\{1\,\ldots,M\right\}$ 
and $\theta=\left\{\mu_{1:M}, \sigma\right\}$ denotes the set of model parameters. We can learn the optimal parameters, 
$\theta=\left\{\mu_{1,\dots,M}, \sigma\right\}$, by maximising the data log-likelihood 
$\mathcal{L}\left(\theta\right)\triangleq \log p\left(Y=\mathcal{Y} \vert\theta\right)$ using the EM algorithm. The EM algorithm optimises the variational lower bound to the log-likelihood:
\begin{align}
L\left(\mathcal{Y},\theta\right)
\triangleq
\sum_{n}
\sum_{c}p_c^{(n)}
\log p\left(C=c,Y=\mathbf{y}^{\left(n\right)}\vert\theta\right)& \nonumber \\   +
\sum_{n}\mathcal{H}\left(p_c^{(n)}\right) &
\label{eq:free_t} 
\\
=
\sum_n\sum_c p_c^{(n)}\log\frac{p\left(C=c\vert Y=\yn,\theta\right)}{p_c^{(n)}} & \nonumber \\
+\sum_n\log p\left(Y=\yn \vert \theta \right) &
\label{eq:free_p}
\end{align}
%
%
%
%
%
where $\mathcal{H}\left(p_c^{(n)}\right)$ 
denotes the Shannon entropy of the distribution $p_c^{(n)}$.
The distribution $p_c^{(n)}$  is typically set to be the posterior distribution $p\left(
C=c\vert Y=\yn;\hat{\theta}\right)$, as it sets the first term of Eq.~\ref{eq:free_p} 
to $0$ and the variational lower bound to be equal to the log-likelihood\footnote{The first term 
of Eq. 2 is the negative KL-divergence between the distribution $p_c^{(n)}$ and the exact 
posterior, $p\left(C=c\vert Y=\yn,\theta\right)$.}.
%
%
Under the assumed model constraints each Gaussian distribution is defined as
$p\left(Y=\yn\vert C=c; \theta\right)=\left(2\pi\sigma^{2}\right)^{-\frac{D}{2}}
e^{-\frac{\dnc}{2\sigma^{2}}}$,
where
$\dnc =\left\Vert \yn-\mathbf{\mu}_{c}\right\Vert^2 $
is the squared euclidean distance between the datapoint $\yn$ and the mean of Gaussian indexed by $c$, 
and $D$ is the number of observed variables.

Exact EM is an iterative algorithm that optimises the likelihood by alternating between two s.pdf. 
The first step, E-step, is to identify the distribution $p_c^{(n)}$ that sets the lower bound in Eq.~\ref{eq:free_p} 
to be equal to the log-likelihood. That is $p_c^{(n)}$ has to be equal to the posterior in order to set the  KL-divergence in Eq.~\ref{eq:free_p} to be equal to $0$. For the Gaussian Mixture Model in this work that would be:
\begin{equation}
    p_c^{(n)} = 
    \exp\left(-\dnc/2 \sigma^{2}\right)
    /
    \sum_{c'=1}^{M}\exp\left(-\dncp /2 \sigma^{2} \right) 
    \label{eq:post}
\end{equation}
Here we notice that Eq.~\ref{eq:post} is a softmax function which produces mostly $0$ values. In fact, for $\sigma^2\rightarrow0$ it is exactly equal to the maximum indicator function for the (negative) distances, i.e. it returns the value $1$ for the smallest distance and $0$ for all others, and is often considered as the probabilistic analogue of k-means\cite{luecke2019kmeans,kulis2011revisiting,broderick2013mad}.
The second step, M-step, amounts to  maximising Eq.~\ref{eq:free_t} with 
respect to $\theta$ using a gradient update as:
\begin{equation}
    \mathbf{\mu}_c
    =
    \frac{
    \sum_{n=1}^N p_c^{(n)} \mathbf{y}^{(n)}
    }
    {
    \sum_{n=1}^N p_c^{(n)}
    }
    \label{eq:mu_update}
\end{equation}
        %
\begin{equation}
    \sigma^2 = \frac{1}{DN}\sum_{n=1}^N\sum_{c=1}^Mp_c^{(n)} \left\|\mathbf{y}^{(n)} - \mathbf{\mu}_c\right\|^2
    \label{eq:sigma_update}
\end{equation}
The EM algorithm iterates between the E-step and M-step until $\theta$ converges. Updating $\sigma^2$, as opposed to the hard-assignment produced by $\sigma^2\rightarrow0$ in k-means, increases the variational lower bound \cite{luecke2019kmeans} and offers a better and more efficient approximation of the log-likelihood.

The E-step requires estimating the differences between all clusters and 
all the datapoints. 
Thus, the complexity of the E-step is $\mathcal{O}\left(DNM\right)$ 
making it a very efficient algorithm. 
 Here, we focus on a method
to avoid estimating the softmax over all dimensions since it leads to redundant computation.
In order to avoid the dependency of the complexity on $M$, we use an approximation $\qnc$ of the posterior, $p_c^{(n)}$, over a subset $\Kn\subset\left\{1,\ldots,M\right\}$, with $\vert\Kn\vert=H$ as:
\begin{equation}
    \qnc = \frac{\exp\left(-\dnc/2 \sigma^{2}\right)}{\sum_{c'\in \Kn}\exp\left(-\dncp /2 \sigma^{2} \right)} \delta \left(c\in\Kn\right)
    \label{eq:approx_post}
\end{equation}
where $\delta\left(c\in\Kn\right)$ is the Kronecker delta. In other words, we assume that clusters outside $\Kn$ have a probability of $0$ for datapoint $\yn$, and therefore are not estimated. 
Using $\qnc$ instead of $p_c^{(n)}$, modifies the exact EM algorithm by not setting the 
KL-divergence to $0$  at the E-step. However, we can derive an algorithm an algorithm 
that monotonically increases the variational lower bound by identifying a $\qnc$ that decreases the KL-divergence at 
each E-step. 

\begin{proposition}
Let $\Kn$ be a set of cluster indices, and 
$\mathcal{K'}^{(n)}
=
\mathcal{K}^{(n)} 
\setminus
\left\{ i \right\}
\cup
\left\{ j \right\}$, where $i\in \mathcal{K}^{(n)}$, $j \notin \mathcal{K}^{(n)} $. Then $K\!L[q^{(n)}\Vert p^{(n)}]<
K\!L[{q'}^{(n)}\Vert p^{(n)}]$ if and only if 
$d_i^{(n)}<d_j^{(n)}$.
\label{prop:KL}
\end{proposition}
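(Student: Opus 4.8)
The plan is to exploit the fact that the truncated distribution $\qnc$ is just the unnormalised posterior weight $\exp(-\dnc/2\sigma^{2})$ renormalised over $\Kn$, so that the KL divergence against the full posterior collapses to a ratio of normalising constants. First I would introduce the shorthand $s_c \triangleq \exp(-\dnc/2\sigma^{2})$, the full partition function $Z \triangleq \sum_{c'=1}^{M} s_{c'}$, and the truncated ones $Z_{\Kn} \triangleq \sum_{c' \in \Kn} s_{c'}$ and $Z_{\mathcal{K'}^{(n)}} \triangleq \sum_{c' \in \mathcal{K'}^{(n)}} s_{c'}$, so that $p_c^{(n)} = s_c / Z$ and $\qnc = (s_c / Z_{\Kn})\,\delta(c \in \Kn)$, and likewise for ${q'}^{(n)}$.

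Second, I would substitute these into $K\!L[q^{(n)}\Vert p^{(n)}] = \sum_{c \in \Kn} \qnc \log(\qnc / p_c^{(n)})$. The weight $s_c$ cancels inside the logarithm, leaving $\log(Z / Z_{\Kn})$, which does not depend on $c$; since the $\qnc$ sum to one over $\Kn$ this yields the identity $K\!L[q^{(n)}\Vert p^{(n)}] = \log Z - \log Z_{\Kn}$, and similarly $K\!L[{q'}^{(n)}\Vert p^{(n)}] = \log Z - \log Z_{\mathcal{K'}^{(n)}}$.

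Third, since $\log Z$ is common to both, the inequality $K\!L[q^{(n)}\Vert p^{(n)}] < K\!L[{q'}^{(n)}\Vert p^{(n)}]$ is equivalent to $Z_{\Kn} > Z_{\mathcal{K'}^{(n)}}$. Because $\mathcal{K'}^{(n)}$ is obtained from $\Kn$ by removing $i$ and inserting $j$, we have $Z_{\mathcal{K'}^{(n)}} = Z_{\Kn} - s_i + s_j$, so the inequality reduces to $s_i > s_j$; and because $t \mapsto \exp(-t/2\sigma^{2})$ is strictly decreasing for $\sigma^2 > 0$, $s_i > s_j$ holds if and only if $d_i^{(n)} < d_j^{(n)}$. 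Chaining these equivalences gives the claim in both directions.

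There is no real obstacle here: the only step that needs a moment's care is recognising that the $s_c$ factors cancel, so that the KL reduces to the constant $\log(Z/Z_{\Kn})$; everything after that is elementary algebra together with monotonicity of the exponential. I would also remark in passing that the boundary case $d_i^{(n)} = d_j^{(n)}$ gives $K\!L[q^{(n)}\Vert p^{(n)}] = K\!L[{q'}^{(n)}\Vert p^{(n)}]$, consistent with the strict "if and only if", and that the identity $K\!L[q^{(n)}\Vert p^{(n)}] = \log Z - \log Z_{\Kn}$ is precisely what motivates using this single-swap test as the E-step update rule, since it shows the KL decreases monotonically as $Z_{\Kn}$ grows.
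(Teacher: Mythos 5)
Your proof is correct and follows essentially the same route as the paper's: both reduce the KL divergence to a constant $\log$ of a ratio of normalising sums (the paper writes it as $-\log\sum_{c'\in\Kn}p_{c'}^{(n)}$, you as $\log Z-\log Z_{\Kn}$), then observe that the two sums differ only in the swapped terms $i$ and $j$, and conclude via monotonicity of the exponential. Your version is slightly cleaner in making the cancellation and the single-swap algebra explicit, but there is no substantive difference.
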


\begin{proof}
Since all the Gaussians are equiprobable $d_i^{(n)}<d_j^{(n)}\Rightarrow p_i^{(n)}>p_j^{(n)}$. Note that $\lim_{x\rightarrow0}x\log x=0$. It follows that:
\begin{align}
K\!L[q^{(n)}\Vert p^{(n)}]<
K\!L[{q'}^{(n)}\Vert p^{(n)}] & \Leftrightarrow \nonumber\\
\sum_{c\in \mathcal{K}^{(n)}} q_c^{(n)} \log\frac{q_c^{(n)}}{p_c^{(n)}}<
\sum_{c\in \mathcal{K'}^{(n)}} {q'}_c^{(n)} \log\frac{{q'}_c^{(n)}}{p_c^{(n)}} & \Leftrightarrow   \nonumber\\
\sum_{c\in \mathcal{K}^{(n)}} q_c^{(n)} \log\frac{
p_c^{(n)}/ \sum_{c'\in \mathcal{K}^{(n)}} p_{c'}^{(n)}
}{p_c^{(n)}}< &       \nonumber\\
\sum_{c\in \mathcal{K'}^{(n)}} {q'}_c^{(n)}  \log
\frac{
p_c^{(n)}/\sum_{c'\in \mathcal{K'}^{(n)}}p_{c'}^{(n)}
}{p_c^{(n)}} & 
\Leftrightarrow                            \nonumber\\
\sum_{c\in \mathcal{K}^{(n)}} q_c^{(n)} \log\sum_{c'\in \mathcal{K}^{(n)}} p_{c'}^{(n)}> \nonumber\\
\sum_{c\in \mathcal{K'}^{(n)}} {q'}_c^{(n)} \log \sum_{c'\in \mathcal{K'}^{(n)}} p_{c'}^{(n)} & \Leftrightarrow \nonumber\\
\log\sum_{c'\in \mathcal{K}^{(n)}} p_{c'}^{(n)}>
\log\sum_{c'\in \mathcal{K'}^{(n)}} p_{c'}^{(n)} 
\Leftrightarrow 
p_{i}^{(n)}>
p_{j}^{(n)} &\nonumber
\end{align}
\end{proof}

Proposition \ref{prop:KL} shows that in order to decrease the KL-divergence at each E-step 
we only need to iteratively update  the set $\Kn$ with clusters that 
are closer to the data-point, $\yn$. The M-step can be modified to utilise $\qnc$, instead of $p_c^{(n)}$, and maintain monotonic convergence 
\cite{NealHinton1998}.

To identify the clusters in $\Kn$, we start by selecting  $H$ clusters uniformly 
at random. We iteratively update $\Kn$ by using $R$ randomly sampled  clusters 
in the vicinity of the one that is nearest to the datapoint $\yn$. 
To efficiently identify the clusters centred near a datapoint, we define a distribution 
$p\left(C_t\vert C_{t-1}=\bar{c}_n; S\right)$, 
where  $\bar{c}_n={\arg\min}_c \left\{\dnc\vert c\in \Kn\right\}$. 
The parameter $S\in\mathbb{R}^{M\times M}$ denotes a similarity matrix among the clusters that assigns higher values, $S_{i,j}$, to cluster pairs, $\{i,j\}$, that are likely to be close to the same datapoints, as in Eq. \ref{eq:sim_b}. 
The iterative update of $\Kn$ is defined as:
\begin{align}
    &\bar{\mathcal{K}}^{(n)}_t=
    \Kn_{t-1} 
      \cup  \left\{c_{1:R}\vert c_i \sim p\left(C_t\vert C_{t-1}=\bar{c}_n\right)\land c_i \notin \Kn_{t-1} \right\}
      \\
    &\Kn_t=\left\{ c\vert c\in \bar{\mathcal{K}}^{(n)}_t \textrm{ with the $H$ smallest }d_{c}^{(n)}\right\} 
    \label{eq:Kn-it-prior}
\end{align}
where $t$ denotes the EM iteration. $p\left(C_t\vert C_{t-1}=\bar{c}_n; S\right)$ is the distribution that 
is given by the normalised row of a cluster similarity matrix $S$ after setting the probabilities corresponding to $\Kn$ to $0$:
\begin{equation}
    p\left(C_t=c\vert C_{t-1}=\bar{c}_n; S\right) = \frac{S_{\bar{c}_n,c}}{\sum_{c'\in\Kn}S_{\bar{c}_n,c'}}
    \delta\left(c\notin \Kn\right)
\end{equation}
i.e. the distribution at time $t$ is given by the row defined by the cluster $\bar{c}_n$ that had the minimal distance with the datapoint $\yn$ at time $t-1$.

The parameter updates, from Eq. \ref{eq:mu_update} and \ref{eq:sigma_update}, are adapted to the approximate posterior. 
\begin{equation}
    \mathbf{\mu}_c
    =
    \sum_{n=1}^N \qnc \mathbf{y}^{(n)}
    /
    \sum_{n=1}^N \qnc 
    \label{eq:mu_update_b}
\end{equation}

\begin{equation}
    \sigma^2
    = 
    \frac{1}{DN}\sum_{n=1 }^N\sum_{c\in\Kn} \qnc \left\|\mathbf{y}^{(n)} - \mathbf{\mu}_c\right\|^2
    \label{eq:sigma_update_b}
\end{equation}

The similarity matrix is defined based on the 
distances $\dnc$ under the assumption that nearby clusters have small distances to similar datapoints 
\begin{equation}
    S_{i,j} = \frac{1}{N} \sum_{n =1}^N e^{-\left(d_i^{(n)}+d_j^{(n)}\right)}
    \delta\left(\{i,j\} \subset\Kn\right)
    \label{eq:sim_b}
\end{equation}
\begin{algorithm}[H]
\begin{algorithmic}[1]
\REQUIRE {Dataset $\mathcal{X}$, \# of centres $M$} 
\STATE   initialise $\mathbf{\mu}_{1:M}$, $\sigma$, $\mathcal{K}^{\left(n\right)}$ and $S=0$
for all $n$;
\REPEAT
\STATE $\mu_{1:M}^{new}=0$, $\sigma^{new}=0$, and $S^{new}=0$
\STATE $\mathcal{J}=\left\{1,\ldots,N\right\} $ 
%
\FOR{$n \in \mathcal{J}$}
\STATE $\bar{c}_n={\arg\min}_c \left\{\Vert \yn - \mu_c\Vert^2\vert c\in \Kn\right\}$
\STATE $p\left(C_t=c\vert C_{t-1}=\bar{c}_n;S\right):=\frac{S_{\bar{c}_n,c}}{\sum_{c'}S_{\bar{c}_n,c'}}$
\STATE $ \bar{\mathcal{K}}^{(n)}= \left\{c_{1:R}\vert c_i \sim p\left(C_{t} \vert C_{t-1}=\bar{c}_n;S\right)\land  c_i\notin\Kn \right\}$
\STATE $ \bar{\mathcal{K}}^{(n)}=\bar{\mathcal{K}}^{(n)} \cup  \Kn$
\FOR{$c\in \bar{\mathcal{K}}^{(n)}$}
\STATE  $d_{c}^{(n)}=\left\Vert \mathbf{y}^{(n)}-\mathbf{\mu}_{c}\right\Vert^2 $
\ENDFOR
\STATE  $\Kn=\left\{ c\vert c\in \bar{\mathcal{K}}^{(n)} \textrm{ with the $H$ smallest }d_{c}^{(n)}\right\} $
\ENDFOR
\STATE Calculate $\mu_{1:M}$,$\sigma^2$, and $S$ using  Eqs. \ref{eq:mu_update_b}--\ref{eq:sim_b}
\UNTIL  $\mathbf{\mu}_{1:M}$ and $\sigma^{2}$ have converged 
\end{algorithmic}\caption{Data Similarity Gaussian Mixture Model (D-GMM)}\label{alg:st-gmm}
\end{algorithm}
Eq. \ref{eq:sim_b} produces a symmetric positive definite matrix,
that is used to sample datapoints near the optimal at each step of the process, 
with a simple reduction operation over pre-computed values.
Iterating between Eq. \ref{eq:approx_post} and 
the parameter updates, Eqs. \ref{eq:mu_update_b}-\ref{eq:sim_b}, details an algorithm that 
we call Data Similarity GMM (D-GMM), Alg. \ref{alg:st-gmm}, due to the similarity matrix being based on a data ``voting'' process.
%
The complexity of an E-step of the  \emph{D-GMM} algorithm reduces compared to an E-step of the  exact EM algorithm for GMMs from $\mathcal{O}\left(NMD\right)$ 
to  $\mathcal{O}\left(N\left(R+H\right)D\right)$, where typically $R+H<<M$. 
For the M-step, the complexity becomes $\mathcal{O}\left(NHD+NH^2\right)$ from $\mathcal{O}\left(NMD\right)$, however, as we will show in the experiments' section, $H^2<<M$ to be sufficient for most applications.

\paragraph{Initialisation.} 
During the first epoch of  the proposed  algorithm the sets $\Kn$ are initialised using prior samples.
The centres of the gaussians, $\mu_{1:C}$, are initialised using the 
AFK-MC\textsuperscript{2} \cite{bachem2016fast} initialisation method.  After an epoch has passed, the $\Kn$ is updated as in algorithm \ref{alg:st-gmm}.
%
The AFK-MC\textsuperscript{2} algorithm samples an initial centre $\mu_1\in \mathcal{Y}$ uniformly at random and then uses it to derive the 
proposal distribution $g(\mathbf{y}|\mu_1)$. A markov chain of length $m$ is used to sample sufficiently distinct new centres, $\mu_{2:M}$, iteratively. 
%
%
The complexity of  AFK-MC\textsuperscript{2} is 
$\mathcal{O}\left(N D\right)$  to define the proposal distribution $g\left(\mathbf{y}\vert \mu_{1}\right)$. The centres are sampled from the data using Markov chains of length $m$ with a complexity of  $\mathcal{O}\left(m(M-1)^2 D\right)$.
\paragraph{Lightweight Coresets (lwcs).} To further improve computational efficiency we can optionally use coresets of the dataset \cite{bachem2018scalable,feldman2011scalable,lucic2017training}. 
Coresets are smaller, $N'<<N$, representative subsets, $\mathcal{Y}'=\left\{\left(\mathbf{y}_1,w_1\right),\ldots,\left(\mathbf{y}_{N'},w_{N'}\right) \right\}$,
of a full dataset, $\mathcal{Y}$, in which each datapoint is individually weighted by a weight $w_{1:N'}$ depending on its significance in describing the original data. The objective on a coreset is adjusted to account for the weights on each data point:
\begin{align}
L\left(\mathcal{Y}',\theta\right)
\triangleq
\sum_{n} w_{n}
\sum_{c\in \Kn}q_c^{(n)}
\log p\left(C=c,Y=\mathbf{y}^{\left(n\right)}\vert\theta\right)   
\nonumber\\
+
\sum_{n}w_{n}\mathcal{H}\left(\qnc\right)
\label{eq:free_c}
\end{align}

Since the parameter updates are gradient-based updates of Eq. \ref{eq:free_c}, the weights $w_{1:N'}$ are a multiplicative constant on the parameters and therefore the parameter updates become: 
\begin{equation}
    \mathbf{\mu}_c
    =
    \sum_{n=1}^{N'} w_{n} \qnc \mathbf{y}^{(n)} /
    \sum_{n=1}^{N'} w_{n} \qnc 
    \label{eq:mu_update_b_c} 
\end{equation}
\begin{equation}
    \sigma^2 = \frac{1}{DN'}\sum_{n =1}^{N'}\sum_{c} w_n \qnc \left\|\mathbf{y}^{(n)} - \mathbf{\mu}_c\right\|^2
    \label{eq:sigma_update_b_c}
\end{equation}
\begin{equation}
    S_{i,j} = \frac{1}{N'} \sum_{n=1}^{N'} w_n e^{-\left(d_i^{(n)}+d_j^{(n)}\right)}
    \delta\left(\{i,j\} \subset\Kn\right)
\end{equation}
\begin{figure}[h]
    \centering
    \includegraphics[width=0.45\textwidth]{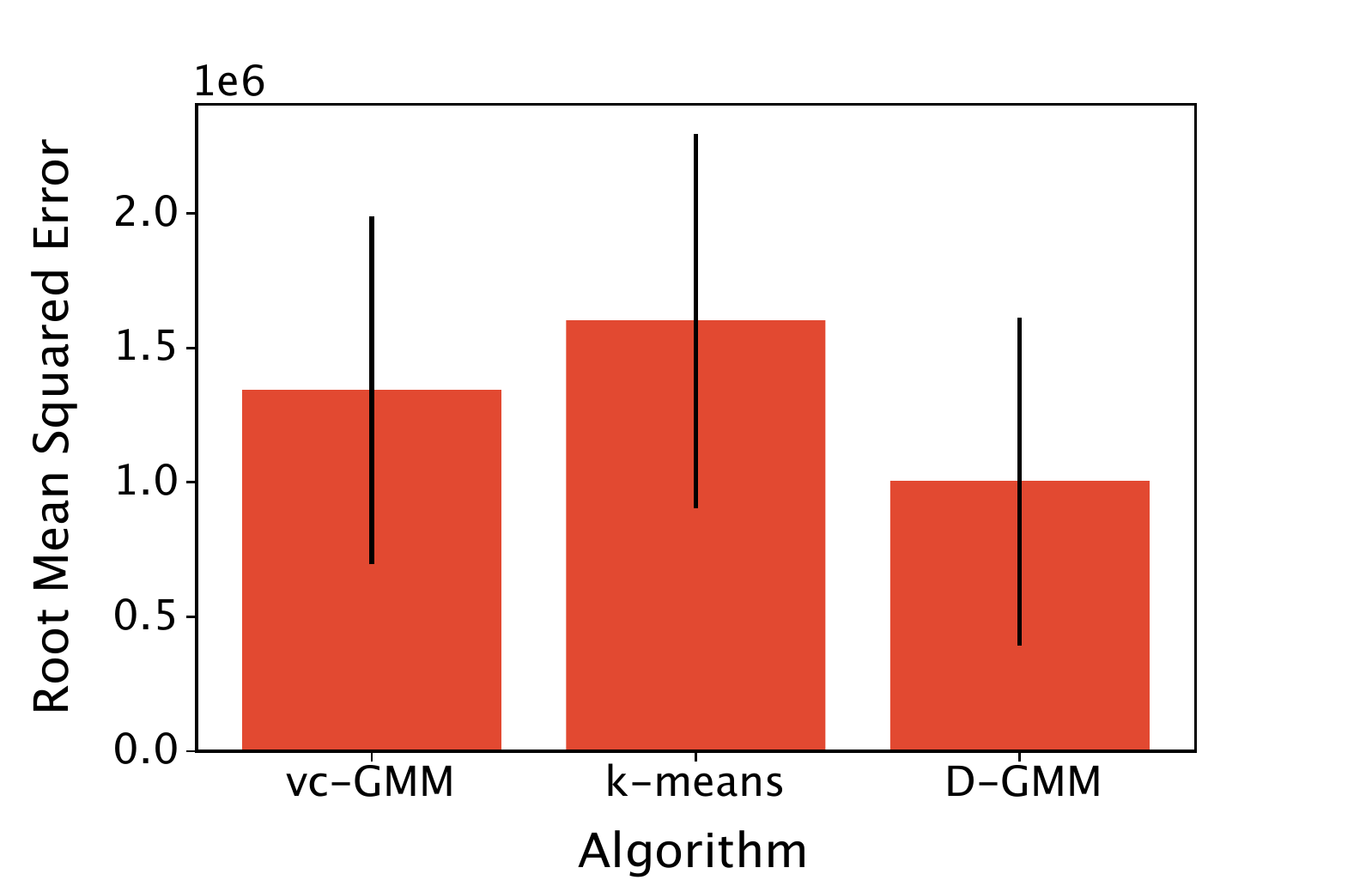}
    \includegraphics[width=0.42\textwidth]{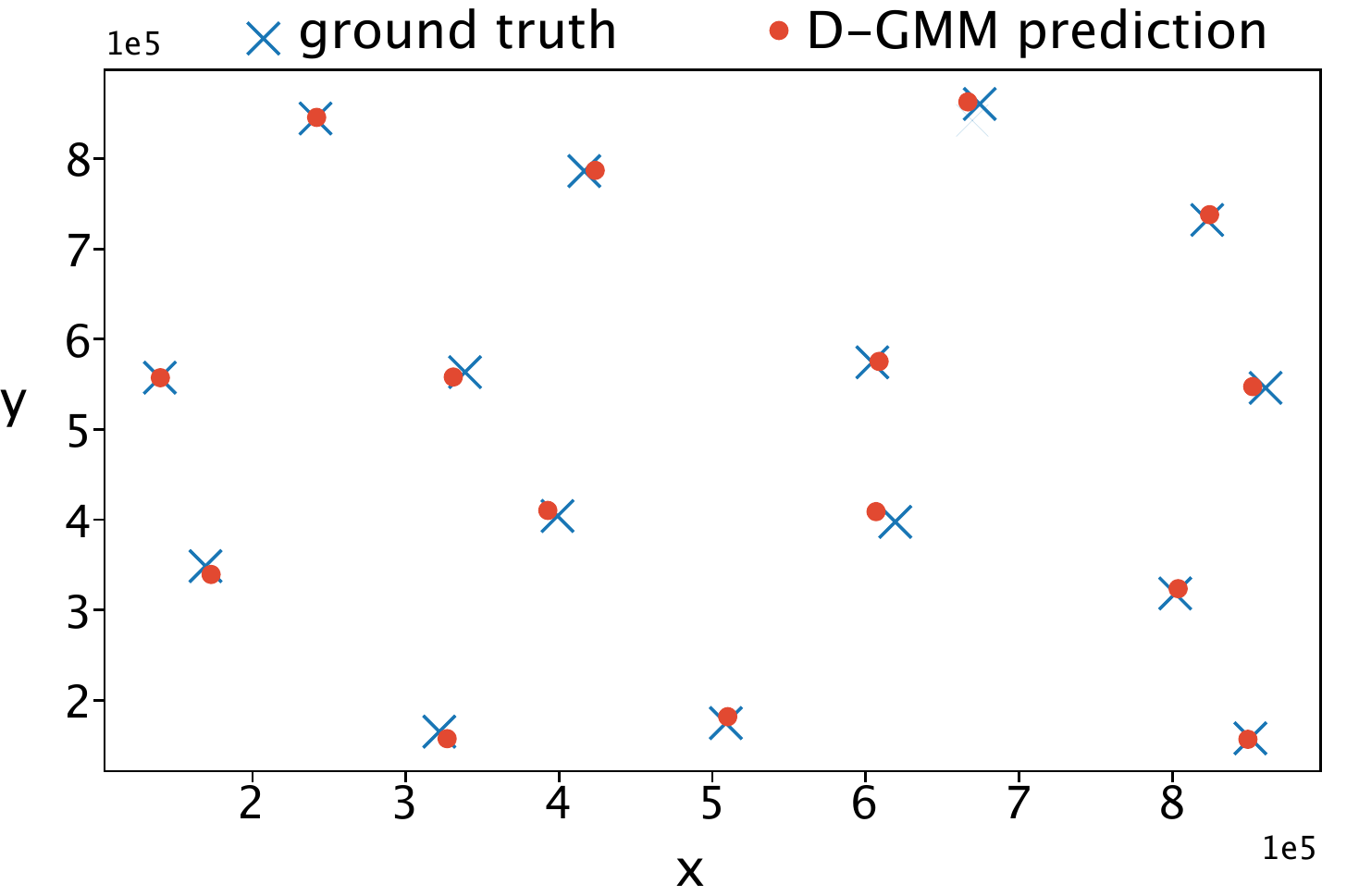}
    \caption{{\bf Validation on a two-dimensional synthetic dataset:} D-GMM outperforms other methods across $500$ trials (top) and fully recovers all centres (bottom)}
    \label{fig:articifial}
\end{figure}
These updates can replace Eq. \ref{eq:mu_update_b} to \ref{eq:sim_b} 
in algorithm \ref{alg:st-gmm} to allow applications on a coreset $\mathcal{Y}'$. Working 
on coresets introduces an error in the approximation that has been analysed rigorously in earlier work \cite{bachem2018scalable}. Constructing the coreset requires two iterations of complexity 
$\mathcal{O}\left(ND\right)$ over the data. Working on coreset reduces the complexity of D-GMM 
to $\mathcal{O}\left(N'\left(R+H\right)D\right)$ for the E-step and $\mathcal{O}\left(N'HD + N'H^2\right)$ for the M-step. The complexity of AFK-MC\textsuperscript{2} 
is also reduced since the proposal distribution is defined on the coreset with complexity $\mathcal{O}\left(N'D\right)$.

\section{Experiments and results}
\label{sec:exp}

We evaluate the performance of the algorithm experimentally on three classes of tasks. The software used for the experiments is a vectorised C++ implementation provided in the supplementary material. 
First, we examine convergence on artificial data where the ground truth is known. 
We proceed with a comparison against the  state-of-the-art algorithm for training GMMs with similar constraints vc-GMM \cite{hirschberger2019accelerated} on popular clustering 
datasets. 

\begin{figure}[ht!]
\includegraphics[width=0.4\textwidth]{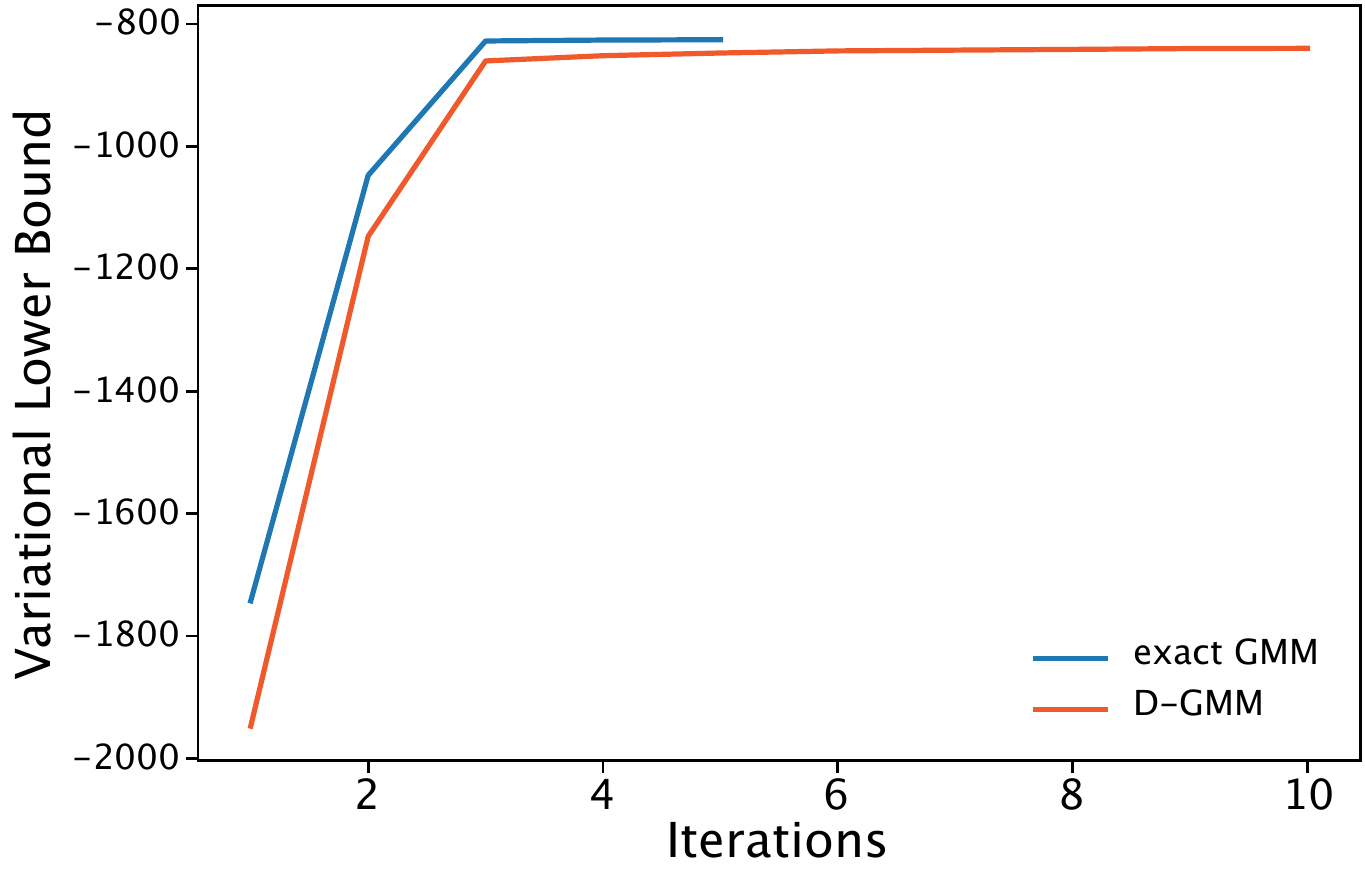}
\caption{\textbf{Variational lower bound values of D-GMM through EM iterations compared to the log-likelihood of the exact algorithm:} We can see that our approximation
is slower to converge, however, each iteration is considerably more efficient.
}
\label{fig:free_energy}
\end{figure}

For all tasks, the Gaussian centres are initialised using the AFK-MC\textsuperscript{2} algorithm with $m=5$. Furthermore, we follow the same \textbf{convergence protocol} as in \cite{hirschberger2019accelerated} and terminate the algorithm 
when the variational lower bound increment following Eq. \ref{eq:free_c} is less than $.pdfilon=10^{-3}$.
Unless stated otherwise, we evaluate the stability of the 
results on 10 repetitions for all experiments.

For clarity, below is a reminder for the hyperparameter notations:

\begin{itemize}
\itemsep0em 
    \item $M$ denotes the number of centres
    \item $N'$ denotes the coreset size
    \item $H$ and $C'$ denote the size of the truncated subspace for D-GMM and vc-GMM respectively.
    \item $R$ and $G$ are the search space hyperparameters for D-GMM and vc-GMM respectively.  
\end{itemize}

When choosing the truncation hyperparameters $H$ ($C'$), we consider that the probability values of the exact posterior decays exponentially and accordingly set $H=5$ ($C'=5$) under the assumption that
lower probability values will be negligible. We follow the same rationale for the truncation updates $R$ ($G$). We use various configurations for $M$ and $N'$ so we can compare with the state-of-the-art.

\subsection{Artificial Data}

In this section, we present a convergence analysis on artificial data \cite{Ssets} with $N=5000$ data points and $15$ Gaussian centres.
Fig. \ref{fig:articifial} on the left shows the root mean squared error between the learned centres of the algorithms and the ground truth centres. We compare our algorithm, D-GMM, against vc-GMM, and standard k-means, setting the hyperparameters to $M=15$, $N'=1000$, $H=3$ and $R=5$. The vc-GMM is parametrised with $C'=3$ and $G=5$.
The results suggest that both truncated algorithms are able to recover the centres as well as the exact algorithm.
The slight improvement (below a standard deviation) might be attributed to the fact that a truncated approximation
will ``hard--code'' very low probabilities to $0$ which may enhance numerical stability. With the D-GMM algorithm, the stochastic behaviour might also have an effect on avoiding locally optimal solutions. In Fig. \ref{fig:articifial} on the right, we present an example of a run where the centres were successfully recovered.

\begin{table*}[ht!]
\centering
\footnotesize
\caption{\textbf{Relative quantisation error and distance evaluation speedup}}
\ra{1.3}
\begin{adjustbox}{max width=0.9\textwidth}
\begin{tabular}{@{}c c cccc c c c@{}}\toprule
\textbf{Dataset} & \multicolumn{4}{c}{\textbf{Algorithm}} & \textbf{\makecell{Relative \\Error $\eta$}} & \textbf{\makecell{Distance Eval. \\Speedup}} & \textbf{\makecell{Iters.}}\\
\cmidrule{2-5}
& name & $N'$ & $H(C')$ & $R(G)$\\\midrule
\multirow{3}[2]{*}{\makecell{\textbf{KDD}\\\scriptsize $N=145,751$\\ \scriptsize$D=74$ \\ \scriptsize$M=500$}}
                             & k-means & -        & -  & -   & $0.0\pm0.7\%$   & $\times1.0\pm0.0$      & $5.0\pm0.0$\\
                             & k-means + lwcs & $2^{12}$ & - & -    & $14.0\pm0.5\%$  & $\times35.1\pm0.0$ & $5.0\pm0.0$ \\
                             & vc-GMM  & $2^{12}$ & 5 & 5    & $12.0\pm0.4\%$  & $\times533.1\pm36.5$ & $11.7\pm1.0$\\
                             & \textbf{D-GMM}   & $\mathbf{2^{12}}$ & \textbf{5} & \textbf{5}   & $\mathbf{12.0\pm1.0\%}$  & $\mathbf{\times622.1\pm28.0}$ & $\mathbf{17.0\pm0.7}$\\
\hline                 
\multirow{4}[2]{*}{\makecell{\textbf{CIFAR-10}\\\scriptsize $N_{\text{train}}=50,000$\\ \scriptsize$N_{\text{test}}=10,000$\\ \scriptsize$D=3,072$ \\ \scriptsize$M=500$}}
                             & k-means  & -        & - & -  & $0.0\pm0.0\%$ & $\times1.0\pm0.0$ & $7.6\pm0.4$\\
                             & k-means + lwcs  & $2^{12}$ & - & -  & $7.0\pm0.1\%$ & $\times48.9\pm3.5$ & $5.4\pm0.4$\\
                             & vc-GMM   & $2^{12}$ & 5 & 5  & $7.0\pm0.0\%$ & $\times674.7\pm45.8$ & $11.8\pm0.8$\\
                             & \textbf{D-GMM}    & $\mathbf{2^{12}}$ & \textbf{5} & \textbf{5} & $\mathbf{8.0\pm0.0\%}$ & $\mathbf{\times731.5\pm41.9}$ & $\mathbf{21.4\pm1.2}$\\
\hline               
\multirow{3}[2]{*}{\makecell{\textbf{SONG}\\\scriptsize $N=515,345$\\ \scriptsize$D=90$ \\ \scriptsize$M=4000$}}   
                             & k-means  & -        & - & -  & $0.0\pm0.0\%$ & $\times1.0\pm0.0$ & $5.0\pm0.0$\\
                             & k-means + lwcs  & $2^{16}$ & - & -  & $8.0\pm0.0\%$ & $\times7.8\pm0.0$ & $5.0\pm0.0$\\
                             & vc-GMM   & $2^{16}$ & 5 & 5  & $8.0\pm0.1\%$ & $\times698.2\pm0.7$ & $12.0\pm0.0$\\
                             & \textbf{D-GMM}    & $\mathbf{2^{16}}$ & \textbf{5} & \textbf{5} & $\mathbf{8.0\pm0.2\%}$ & $\mathbf{\times862.1\pm18.3}$ & $\mathbf{21.7\pm0.4}$\\
\hline
\multirow{3}[2]{*}{\makecell{\textbf{SUSY}\\\scriptsize $N=5,000,000$\\ \scriptsize$D=18$ \\ \scriptsize$M=2000$}}    
                             & k-means  & -        & - & -  & $0.0\pm0.0\%$  & $\times1.0\pm0.0$ & $14.7\pm0.4$\\
                             & k-means + lwcs  & $2^{16}$ & - & -  & $6.0\pm0.1\%$ & $\times11.1\pm0.4$ & $14.1\pm0.5$\\
                             & vc-GMM   & $2^{16}$ & 5 & 5  & $6.0\pm0.1\%$ & $\times663.1\pm17.1$ & $25.4\pm0.6$\\
                             & \textbf{D-GMM}    & $\mathbf{2^{16}}$ & \textbf{5} & \textbf{5} & $\mathbf{5.0\pm0.1\%}$  & $\mathbf{\times605.7\pm11.1}$ & $\mathbf{55.6\pm1.0}$\\
\bottomrule
\end{tabular}
\end{adjustbox}
\label{tab:clustering_analysis}
\end{table*}

\subsection{Clustering Analysis}
For a more detailed comparison with the state-of-the-art, we consider a series of well-known clustering 
datasets. Tab. \ref{tab:clustering_analysis} details a comparison between k-means, vc-GMM 
\cite{hirschberger2019accelerated,forster2018can}, and D-GMM. We use the 
k-means algorithm on the full dataset to define a baseline for the centres. The accuracy of the rest of the algorithms
is measured using the relative error 
$\eta = \left(\mathcal{Q}_{\textrm{algorithm}}-\mathcal{Q}_{\textrm{k--means}}\right)/ \mathcal{Q}_{\textrm{k--means}}$, where $\mathcal{Q}$ stands for an algorithm's quantization error. 
Since D-GMM and vc-GMM are going through fewer clusters per datapoint in each iteration,  convergence is slower
for these two algorithms (see Fig.~\ref{fig:free_energy}). However, the efficiency of the algorithm is determined by the overall number of 
distance evaluations. In the last two columns of Tab. \ref{tab:clustering_analysis}, we present the average
number of iterations from initialisation to convergence as well as the average speedup in terms of distance evaluations, $\dnc$, 
relative to k-means. 
The results show a clear speedup for D-GMM in most cases and comparable relative error.

%
Fig. \ref{fig:clustering_analysis} presents a comparison between the efficient clustering algorithms 
with increasing coreset size. K-means on the full dataset is presented as baseline. The size of each 
marker in Fig. \ref{fig:clustering_analysis} represents the size of the coreset. We find that in most 
cases D-GMM clusters data with a low relative error to the baseline for the least 
amount of distance evaluations. 
%
\begin{figure*}[ht!]
    \centering
    \includegraphics[width=0.8\textwidth]{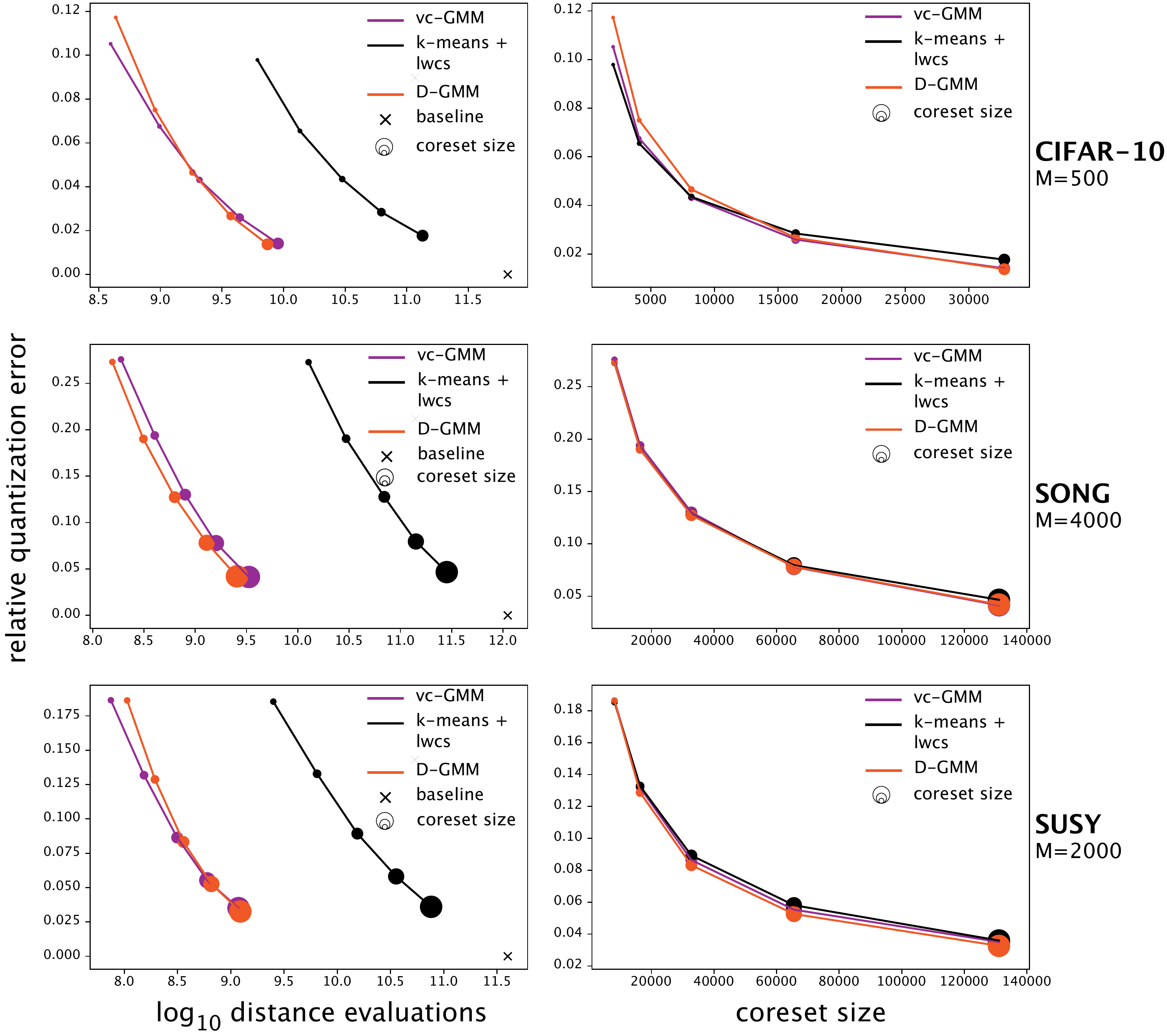}
    \caption{{\bf Distance evaluations vs relative quantisation error on increasingly large datasets}} 
    \label{fig:clustering_analysis}
\end{figure*}
%
\paragraph{Complexity} The approximation method we use is focused on avoiding distance
evaluations, $\dnc$ with all available clusters. Therefore, it is very efficient in problems where 
a high number of clusters is expected to be present in the dataset. Fig. \ref{fig:complexity} (left) 
shows the scaling behaviour of our algorithm with an increasing number of clusters $M$ on 
the CIFAR-10 dataset, with $M$ ranging from $100$ to $1500$ cluster centres. The 
distance evaluations for each algorithm are normalised by the minimum value across all $M$
and presented in a log-log plot which indicates the power of the relationship between operation complexity and number of clusters. We normalised both axes for an easier visualisation of the complexity. As expected, the scaling behaviour of k-means is linear to the number of clusters while the approximations are sub-linear. D-GMM is the most efficient algorithm in terms of distance evaluations as the number of cluster centres increases.
%

\begin{figure*}[h!]
    \centering
    \includegraphics[width=0.38\textwidth]{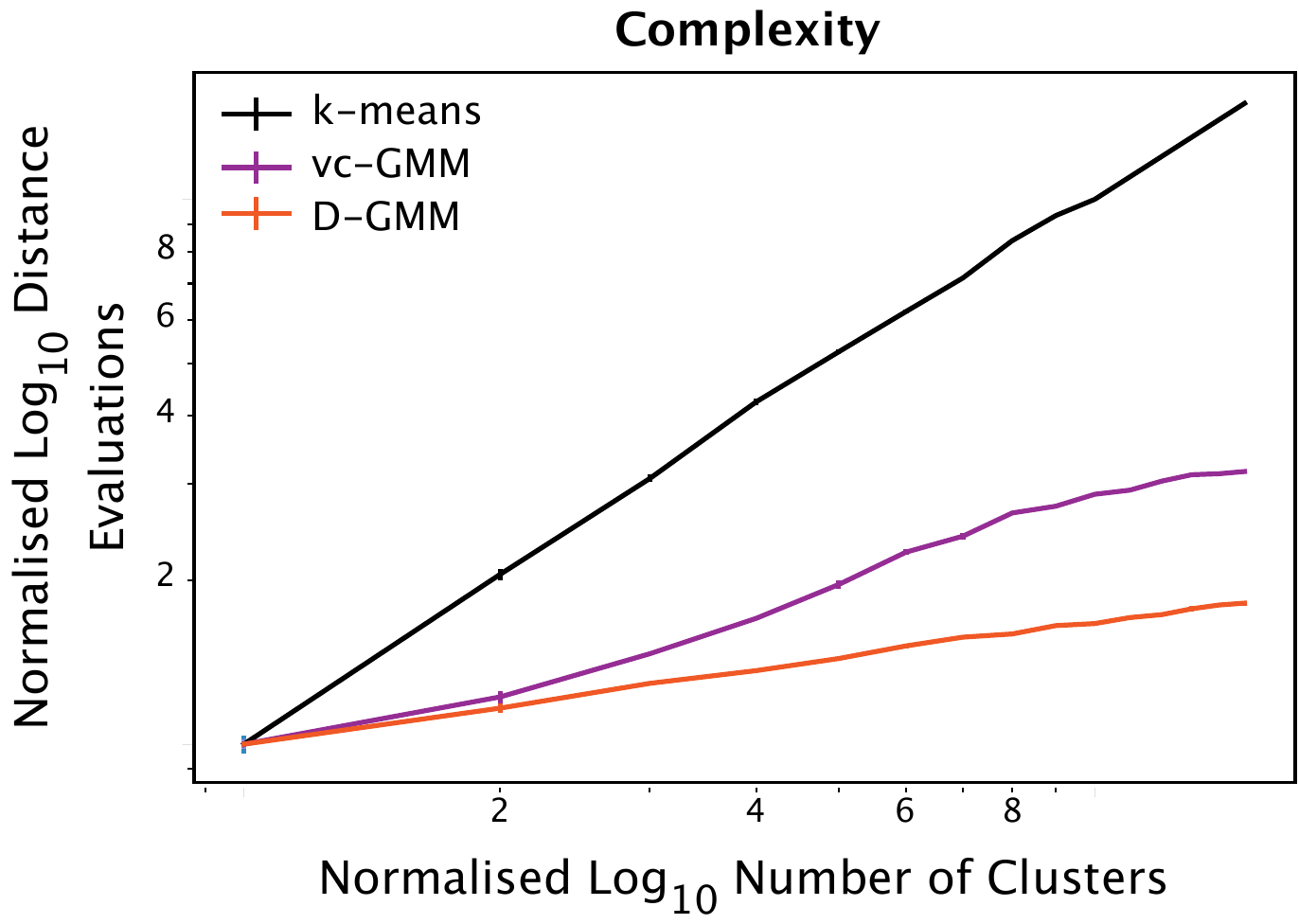}
    \includegraphics[width=0.2\textwidth]{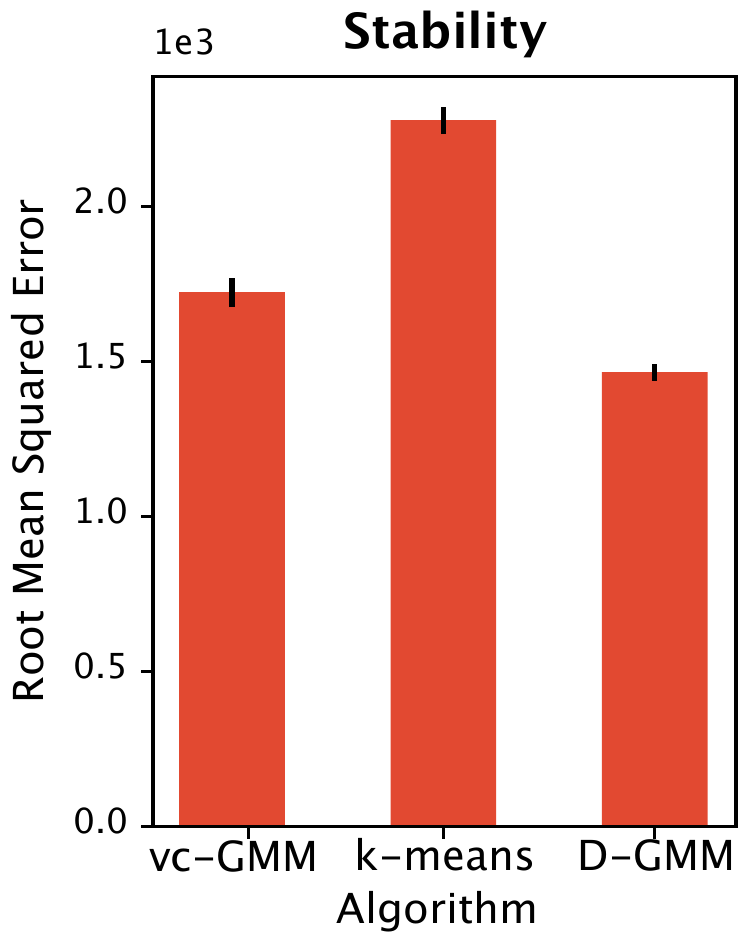} 
    \caption{{\bf Operations complexity and stability on CIFAR-10 over $\mathbf{100}$ trials. (right)} RMSE between the learned centres ($M=500$). {\bf(left)} Normalised log distance evaluations for an increasing $M$, with $100$ trials for each $M$. {\bf (both)} error bars denote $1$ standard deviation.}
    \label{fig:complexity}
\end{figure*}

\begin{figure}[h!]
    \centering
    \includegraphics[width=0.4\textwidth]{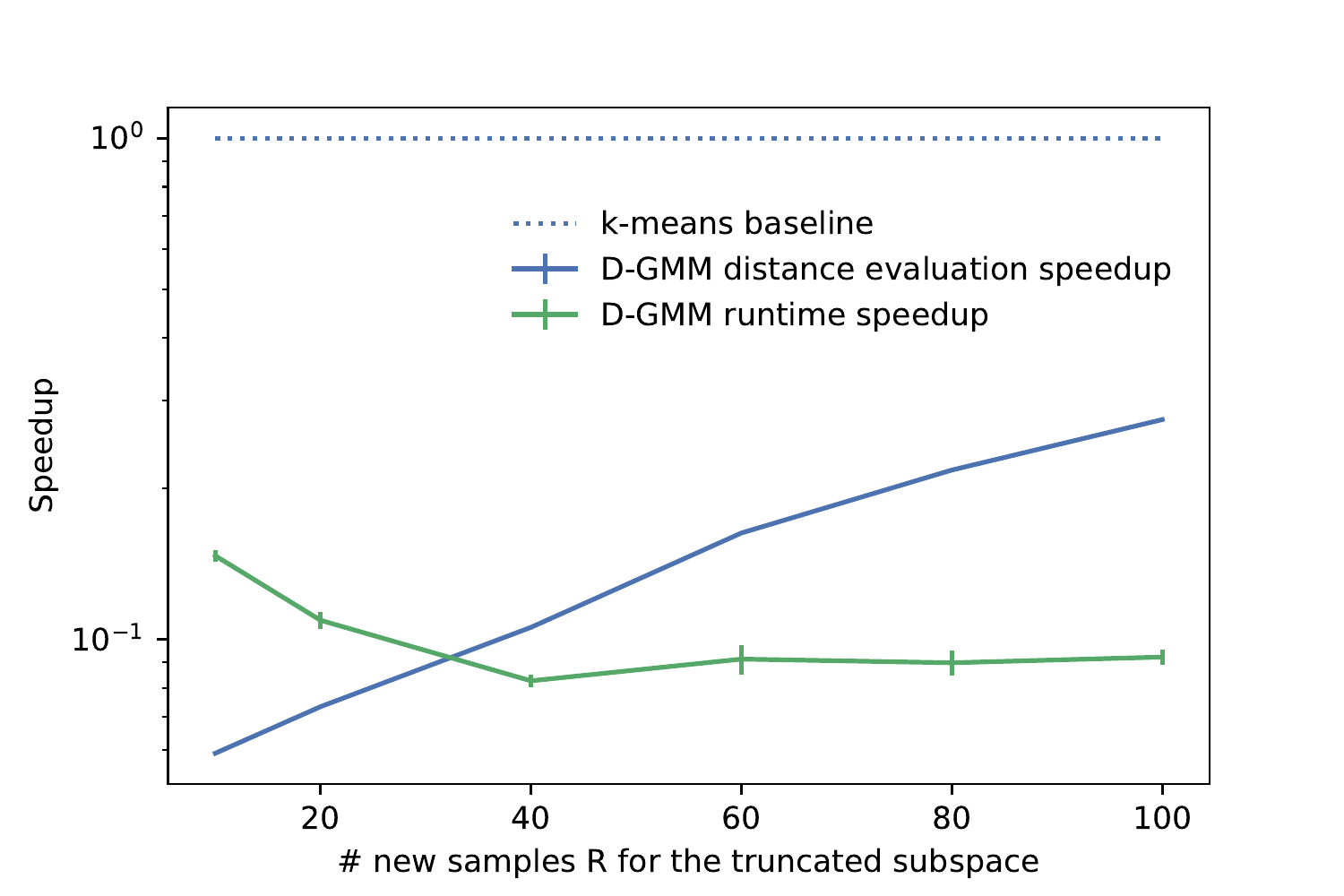}
    \includegraphics[width=0.4\textwidth]{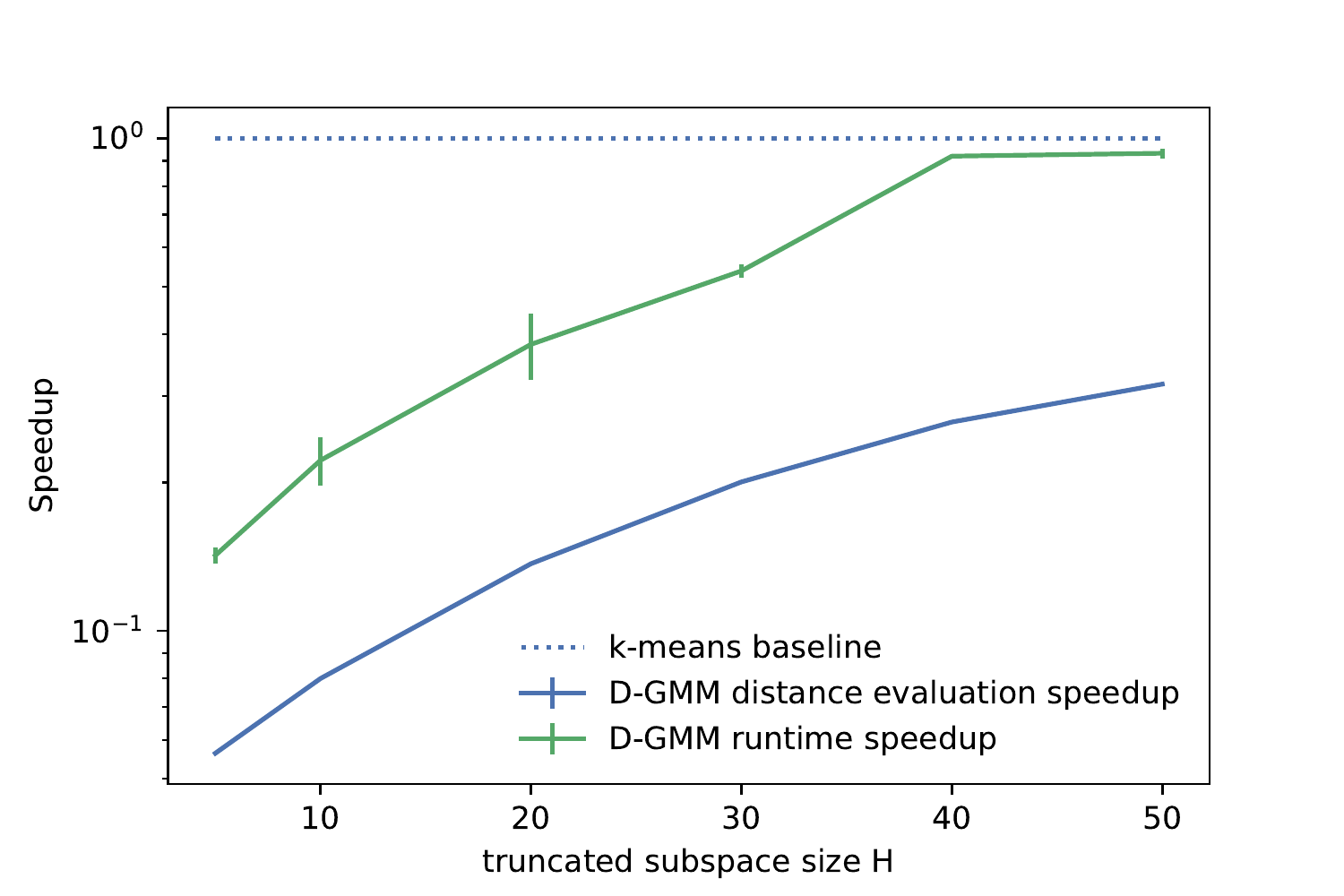}
    \caption{{\bf Hyperparameter search for $R$ (top) and $H$ (bottom).} The blue line shows performance gains with respect to computation. The green line shows performance increase in terms of runtime. The results are relative to the dotted black line which is the exact k-means.}
    \label{fig:hyperparams}
\end{figure}

\paragraph{Stability} We test the ability of the algorithm to recover the same clusters 
using different initialisation. We run the clustering algorithm on CIFAR-10 $100$ times 
with  hyperparameters $M=500, H=5, R=5$, and compare the
recovered centres of every distinct pair of runs  
using the $l_2$-norm between the centres after reshuffling them. The average and standard deviation between all errors are plotted in Fig. \ref{fig:complexity} (right).

\paragraph{Hyperparameter Search}
In Fig. \ref{fig:hyperparams}, we see the effect that the 
hyperparameters $H$ and $R$ have to the optimisation speed-up.
At the top plot, we fix $H=5$ and view the effect $R$ has to the algorithm's number of operations and runtime. We see that reducing the values of $R$ progressively decreases the amount of required operations. In terms of runtime, values lower than $R=40$ introduce a lower speedup. This is due to the fact that we need to perform more iterations and therefore spend more time instantiating samplers than drawing samples from them

To identify the optimal hyperparameter $H$, we fix $R=10$ and observe the effect varying values of $H$ have to the runtime and number of operations of the algorithm. Both runtime and number of operations monotonically reduce with smaller truncated space $H$. Suggesting that caching only a small number of centres is sufficient to efficiently cluster datapoints.

To test the scalability of D-GMM to larger datasets, we use the full ImageNet dataset downsampled to a 64x64 resolution~\cite{deng2009imagenet}. For such a large dataset it was impractical to use k-means as a baseline, as it would not converge in a reasonable time, so we resort to comparing the improvement in number of distance evaluations from initialisation to convergence of the two algorithms. For coreset sizes $2^{13}$, $2^{14}$, $2^{15}$, $2^{16}$, and $2^{17}$ we get a reduction on the number of distance evaluations of a factor of $22$\%, $31$\%, $40$\%, $44$\%, and $49$\% respectively when we use D-GMM compared to vc-GMM.

\section{Discussion}
\label{sec:dis}
We have presented a novel data clustering algorithm. Our algorithm considerably 
increases computational efficiency compared to k-means by calculating the posterior over a data-specific subset of clusters. The subset is iteratively refined by sampling in the neighbourhood of the best performing cluster at each EM iteration.  To identify the neighbourhood of each cluster we propose a similarity matrix based on earlier computed distances between the clusters and datapoints, thus avoiding additional complexity. Furthermore, we implemented lightweight coresets and the AFK-MC\textsuperscript{2} initialisation \cite{bachem2018scalable,bachem2016fast} which are state-of-the-art methods in 
the literature for data pre-processing and GMM centre initialisation 
respectively. We compare our algorithm to vc-GMM \cite{hirschberger2019accelerated,forster2018can} 
which is, to our knowledge, the most efficient GMM algorithm currently available.
In terms of computational complexity, our algorithm is more efficient in 
most cases, improving both with an increasing number of 
datapoints and with an increasing number of clusters compared to vc-GMM.
Furthermore, the advantage in efficiency is complemented by a more stable 
recovery of clusters centres, as demonstrated on the CIFAR-10 database.

It is significant to emphasise that  D-GMM is substantially simpler to intuit and implement compared to vc-GMM.
Arguably, the use of elementary operations on matrix containers is easier to implement than task specific containers for comparison. 
Simplicity of an algorithm is a considerable advantage when 
it comes to communicating and implementing the algorithm 
in different contexts.

Our experiments suggest that the bottleneck 
for D-GMM lies with the efficiency of 
low-level operators like calculating exponentials and sampling from discrete distributions. 
Improving these operators could be an interesting future direction in this work, affecting an even larger body of literature.
A key feature of D-GMM that we consider valuable for further development is that reduced computational complexity implies lower energetic demands. 
Therefore, when setting future software development strategies considering considering low-level operators used in clustering algorithms we can take into account the reduced number of operations of D-GMM.
The constraints we introduce on the GMM data 
model are crucial for the D-GMM approximation, 
however, they have an impact on the expressive potential of our algorithm. 
Future work aims at developing the optimisation algorithm in a way 
that would allow training GMMs with fewer constraints  on covariance and prior structure.

To enable further development of this work, we participate in the \href{https://github.com/variational-sublinear-clustering}{``variational sublinear clustering''} organisation with an international team of independent researchers to jointly develop software for efficient probabilistic clustering. 
The D-GMM implementation, found in the supplementary material, will be contributed to this organisation, under an open source license, and further developed through a collaboration with a larger pool of researchers.

In conclusion, we find that D-GMM sets the state-of-the-art in terms of efficiency and stability for GMM-based clustering. There is room for improvement in terms of optimisation of low-level operators and loosening the GMM constraints. A long-term plan to develop and popularise efficient clustering is under way.

\section*{Acknowledgement}
We would like to thank J\"org L\"ucke for his valuable insight during the preparation of this manuscript.
This work was partially supported by French state funds managed within the “Plan Investissements d’Avenir” by the ANR (reference ANR-10-IAHU-02).

\bibliography{references}
\bibliographystyle{plain}



\end{document}